\documentclass{article} 
\usepackage[T1]{fontenc}
\usepackage{iclr2027_conference,times}

\usepackage{graphicx}
\usepackage{float}
\usepackage{placeins}
\usepackage{array}
\usepackage{booktabs}
\usepackage{xcolor}
\newfloat{algorithm}{t}{loa}
\floatname{algorithm}{Algorithm}

\usepackage{amsmath,amssymb,mathtools,bm}
\usepackage{amsthm}

\newtheorem{proposition}{Proposition}

\theoremstyle{remark}

\newcommand{\sg}{\operatorname{sg}}

\usepackage{amsmath,amsfonts,bm}

\def\eqref#1{equation~\ref{#1}}

\def\1{\bm{1}}

\DeclareMathAlphabet{\mathsfit}{\encodingdefault}{\sfdefault}{m}{sl}
\SetMathAlphabet{\mathsfit}{bold}{\encodingdefault}{\sfdefault}{bx}{n}

\usepackage{hyperref}
\hypersetup{
    pdftitle={From Gradients to Capabilities: Understanding Multi-Teacher On-Policy Distillation},
    pdfauthor={Siqi Zhu, Suozhi Huang, Kaixuan Zhang, Yuheng Yang, Zhanyang Jin, Yihang Sun, Jiaxuan You}}
\usepackage{url}

\title{From Gradients to Capabilities: Understanding Multi-Teacher On-Policy Distillation}

\author{%
\begin{minipage}[t]{\dimexpr\textwidth-2\tabcolsep\relax}
\centering
\makebox[\linewidth][c]{\fontsize{10}{14}\selectfont\bfseries
Siqi Zhu\textsuperscript{1}\hspace{1.5em}Suozhi Huang\textsuperscript{2}\hspace{1.5em}%
Kaixuan Zhang\textsuperscript{1}\hspace{1.5em}Yuheng Yang\textsuperscript{3}}\\[3pt]
\makebox[\linewidth][c]{\fontsize{10}{14}\selectfont\bfseries
Zhanyang Jin\textsuperscript{1}\hspace{1.5em}Yihang Sun\textsuperscript{1}\hspace{1.5em}%
Jiaxuan You\textsuperscript{1}}\\[0.7em]
\makebox[\linewidth][c]{\normalfont\small
\textsuperscript{1}University of Illinois Urbana-Champaign\quad
\textsuperscript{2}Princeton University\quad
\textsuperscript{3}Westlake University}
\end{minipage}
}

\iclrfinalcopy
\begin{document}

\maketitle
\fancyhead{}
\renewcommand{\headrulewidth}{0pt}

\begin{abstract}
Multi-teacher on-policy distillation (MOPD) aims to combine the strengths of RL-trained teachers in a single student, but how teacher signals affect parameter changes remains underexplored. We study Qwen3-1.7B with four domain teachers trained with RL from the same initialization as the student, comparing gradients, optimizer updates, and task learning curves, with additional SmolLM3-3B diagnostics. We find that several factors influence teacher signals. First, loss averaging implicitly weights responses: token averaging favors longer responses, and equalizing domain contributions retains this weighting within domains. 
Second, Adam's first moment reduces differences in parameter updates: the cosine similarity is 0.83 between teachers and 0.96 between averaging rules, despite differences in raw gradients.
Third, BF16 rounding hides small changes: about 97\% of FP32 master weights differ from initialization, but only 7--11\% of BF16 weights do. Finally, the top-64 intersection KL gradient closely matches Qwen's full-vocabulary gradient, but the effect on task performance depends on averaging: mathematics accuracy is 2.6 points higher than with sampled-token policy-gradient (PG) under response averaging and 2.1 points lower under global token averaging.
\end{abstract}

\section{Introduction}
\label{sec:intro}

Reinforcement learning produces strong language-model specialists in mathematics, coding, and instruction following, but combining their strengths remains challenging \citep{ma2026mopdmultiteacheronpolicydistillation}. Multi-teacher on-policy distillation (MOPD) trains a single student using next-token probabilities from domain teachers on student-generated responses \citep{xiaomi2026mimov2flash,nvidia2026nemotron3ultra,kimiteam2026kimik3openfrontier,deepseekai2026deepseekv4highlyefficientmilliontoken}. Yet strong teachers do not ensure that the student acquires every teacher's strengths \citep{gao2026openmopddiagnosingfixingcapability}. Because all teachers influence shared student parameters, we study how teacher losses combine and how the optimizer converts their gradients into parameter changes, connecting supervision to updates and task performance.

Existing MOPD recipes average losses within responses \citep{ma2026mopdmultiteacheronpolicydistillation}, normalize at the response and domain levels \citep{bai2026agentsa1}, or correct unequal domain token shares \citep{gao2026openmopddiagnosingfixingcapability}. These choices change the weights assigned to teachers and responses, but when does correcting unequal loss weights improve capability integration?
Prior work also compares losses computed from sampled tokens and top-$k$ vocabulary \citep{ma2026mopdmultiteacheronpolicydistillation}. However, Nemotron 3 Ultra reports no performance advantage from broader vocabulary supervision in its preliminary experiments \citep[Section~3.3.5]{nvidia2026nemotron3ultra}. When does using more vocabulary candidates improve task performance? Separately, parameter analyses show concentrated, sparse changes \citep{yu2026denseupdates,shen2026opdgeometry}. We therefore also explore how the optimizer and numerical precision affect observed parameter changes.

\begin{figure}[t]
\centering
\includegraphics[width=\linewidth]{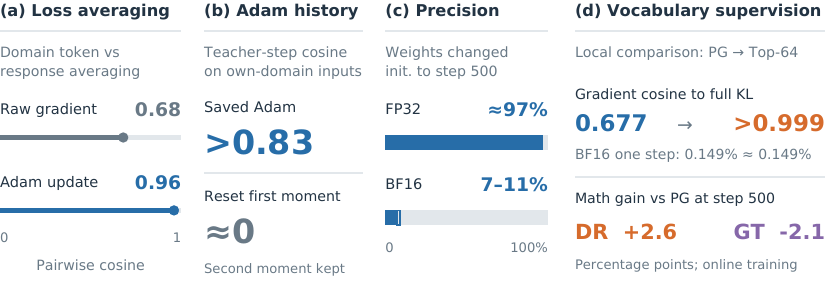}
\caption{\textbf{From supervision choices to parameter updates and task gains.}
(a) Averaging changes gradients more than Adam updates.
(b) Shared momentum aligns teacher updates.
(c) BF16 rounding hides widespread FP32 changes.
(d) Local gradient fidelity and one-step BF16 changes, followed by mathematics gains in separate 500-step training runs. DR/GT denote domain response/global token averaging.}
\label{fig:teaser}
\end{figure}

We use Qwen3-1.7B and RL teachers for mathematics, code, instruction following, and science, with additional diagnostics on SmolLM3-3B. As in existing MOPD recipes, these teachers are trained by RL from the student's initialization \citep{ma2026mopdmultiteacheronpolicydistillation,gao2026openmopddiagnosingfixingcapability}. To isolate loss choices, we compare gradients and optimizer steps from identical student parameters, responses, and optimizer states. We also track per-task learning curves as training progresses. Three findings show how supervision choices affect parameter updates and task performance (Figure~\ref{fig:teaser}):

\noindent\textbf{1. Loss averaging acts as implicit response weighting, and its effect depends on the task and objective.}
Global token averaging gives longer-response domains more loss weight. Equalizing domain weights removes this imbalance, but a covariance identity shows that it still shifts each domain's gradient toward longer responses; domain response averaging weights responses equally. No averaging rule performs best on every task, and the distillation loss changes how loss weights translate into task scores.

\noindent\textbf{2. Adam's first moment aligns updates, and BF16 rounding hides widespread changes.}
Updates computed from different teachers' losses point in similar directions even when their current gradients are nearly orthogonal. Using the PG loss, SGD achieves higher four-task mean scores than Adam with all loss averaging rules. After MOPD training, most student parameters differ from initialization in FP32, but only a small fraction differ after BF16 rounding.

\noindent\textbf{3. Better gradient approximation does not guarantee larger capability gains.}
Gradients from shared teacher and student top-64 vocabulary closely match Qwen's full-vocabulary direction. Final mathematics accuracy with the top-64 intersection KL loss is 2.6 points higher than with the PG loss under response averaging, but 2.1 points lower under global token averaging. Although the PG loss is unbiased for the full-vocabulary gradient, a single sampled token gives a noisy estimate, and neither unbiasedness nor closer approximation predicts which tasks improve. In SmolLM, top-64 vocabulary retain most student probability on average but still leave gradient error.


\section{Preliminaries}
\label{sec:setting}

\paragraph{Multi-teacher on-policy distillation (MOPD).}
For a prompt $x\sim\mathcal D_d$ from domain $d$, the student generates $y\sim p_\theta(\cdot\mid x)$. The domain's frozen teacher $q_d$ provides next-token probabilities at each prefix $h_t=(x,y_{<t})$. Batch $\mathcal B$ pools responses from different domains; each response $i$ uses its domain's teacher and averages $T_i$ valid token losses: $\ell_i=T_i^{-1}\sum_t\ell_{i,t}$. We minimize
\begin{equation}
\mathcal L(\theta)=\sum_{i\in\mathcal B}w_i\ell_i(\theta),
\qquad w_i\geq0,\quad\sum_{i\in\mathcal B}w_i=1.
\label{eq:mopd}
\end{equation}
Each training step uses fresh responses, whose text stays fixed while we differentiate the loss with respect to student parameters $\theta$.

\paragraph{Distillation losses.}
Write $p_\theta(v)=p_\theta(v\mid h_t)$ and $q_d(v)=q_d(v\mid h_t)$. The full-vocabulary KL loss computes reverse KL using all tokens in vocabulary $\mathcal V$; the sampled-token policy-gradient (PG) loss uses the token $y_t$ sampled by the student:
\begin{equation}
\label{eq:full}
\ell_{\mathrm{full}}=\sum_{v\in\mathcal V}p_\theta(v)\log\frac{p_\theta(v)}{q_d(v)},\qquad
\ell_{\mathrm{PG}}(y_t)=-\sg[\log q_d(y_t)-\log p_\theta(y_t)]\log p_\theta(y_t)
\end{equation}

Here $\sg$ treats its argument as constant when differentiating. The PG loss is a surrogate: its expected gradient under $y_t\sim p_\theta$ equals the full reverse-KL gradient (Appendix~\ref{app:pg_unbiasedness}); individual samples can differ.
Let $S_k$ be the student's $k$ most probable tokens under $p_\theta$ and $T_k$ the teacher's $k$ most probable tokens under $q_d$. Top-$k$ OPD usually restrict the KL to one of these sets (student-top-$k$ on $S_k$ or teacher-top-$k$ on $T_k$). Our top-$k$ intersection KL loss instead uses only the tokens in both sets, $I=I_k=S_k\cap T_k$ (Appendix~\ref{app:intersection_topk}):
\begin{equation}
\begin{aligned}
\bar p_\theta^I(v)&=\frac{p_\theta(v)}{\sum_{u\in I}p_\theta(u)},\qquad
\bar q_d^I(v)=\frac{q_d(v)}{\sum_{u\in I}q_d(u)},\quad v\in I,\\
\ell_{\mathrm{top}\text{-}k}&=\sum_{v\in I}\bar p_\theta^I(v)
\log\frac{\bar p_\theta^I(v)}{\bar q_d^I(v)}.
\end{aligned}
\label{eq:topk}
\end{equation}
Full-vocabulary softmax probabilities are rescaled to sum to one within $I_k$, comparing candidates' relative probabilities; empty intersections give zero loss. Because the renormalized probabilities depend only on logits in $I_k$, the loss has zero gradient with respect to the logits of tokens outside $I_k$. The \emph{supervision density} counts vocabulary candidates: 1 for PG, $|I_k|\leq k$ for Top-$k$, and $|\mathcal V|$ for full-vocabulary KL.

\paragraph{Domain weighting and loss normalization.}
Let $\mathcal B_d$ contain domain-$d$ responses and let $\lambda_d\geq0$, $\sum_d\lambda_d=1$. For nonempty responses and domains, response $i\in\mathcal B_d$ has weights
\begin{equation}
w_i^{\mathrm{GT}}=\frac{T_i}{\sum_{j\in\mathcal B}T_j},\qquad
w_i^{\mathrm{DT}}=\frac{\lambda_d T_i}{\sum_{j\in\mathcal B_d}T_j},\qquad
w_i^{\mathrm{DR}}=\frac{\lambda_d}{|\mathcal B_d|}.
\label{eq:averaging}
\end{equation}
\emph{Global token averaging} (GT) weights all response tokens equally, so each domain's weight follows its token share. \emph{Domain token averaging} (DT) fixes each domain's weight at $\lambda_d$ and weights responses by length. \emph{Domain response averaging} (DR) keeps these domain weights but weights responses equally within each domain. Equal prompt counts can still give unequal domain weights under GT.

\section{Loss Averaging Implicitly Weights Domains and Responses}
\label{sec:normalization}

Open-MOPD reports a large instruction-following deficit relative to single-domain students, with math/code responses averaging about 10,500 tokens and IF responses 409 tokens \citep{gao2026openmopddiagnosingfixingcapability}. In our less extreme length regime, we compare the three averaging rules (Section~\ref{sec:setting}) to separate weighting between and within domains.

\begin{table}[t]
\centering
\small
\setlength{\tabcolsep}{4pt}
\renewcommand{\arraystretch}{1.08}
\newcommand{\tablescore}[2]{#1\,{\scriptsize$\pm\,#2$}}
\newcommand{\tableinitial}[1]{#1\,{\scriptsize$\phantom{\pm\,0.00}$}}
\caption{\textbf{MOPD scores by loss averaging and distillation loss.}
Scores (\%) are mean $\pm$ standard deviation across five training seeds; Mean equally weights the four task means. Domain responses and Domain tokens assign equal domain weights, averaging responses equally or by length, respectively. Global token averaging weights all response tokens equally.}
\label{tab:main_outcomes}
\begin{tabular*}{\linewidth}{@{\extracolsep{\fill}}lcccc@{\hspace{14pt}}c@{}}
\toprule
\textbf{Loss averaging} & \textbf{Math} & \textbf{Code} & \textbf{IF} & \textbf{Science} & \textbf{Mean} \\
\midrule
Initial student
& \tableinitial{68.80}
& \tableinitial{16.40}
& \tableinitial{18.33}
& \tableinitial{32.20}
& \tableinitial{33.93} \\
\midrule
\multicolumn{6}{@{}l}{\textbf{Adam, PG loss}} \\

Domain responses
& \tablescore{73.36}{0.79}
& \tablescore{18.26}{0.71}
& \tablescore{28.07}{1.19}
& \tablescore{35.94}{1.42}
& \tablescore{38.91}{0.69} \\

Domain tokens
& \tablescore{73.28}{0.69}
& \tablescore{19.24}{0.83}
& \tablescore{27.07}{1.01}
& \tablescore{34.93}{1.35}
& \tablescore{38.63}{0.65} \\

Global tokens
& \tablescore{76.32}{0.73}
& \tablescore{17.91}{0.66}
& \tablescore{27.13}{1.24}
& \tablescore{33.81}{1.54}
& \tablescore{38.79}{0.70} \\

\midrule
\multicolumn{6}{@{}l}{\textbf{Adam, top-16 intersection KL loss}} \\

Domain responses
& \tablescore{75.48}{0.89}
& \tablescore{17.73}{0.62}
& \tablescore{26.80}{1.12}
& \tablescore{35.98}{1.31}
& \tablescore{39.00}{0.67} \\

\midrule
\multicolumn{6}{@{}l}{\textbf{Adam, top-64 intersection KL loss}} \\

Domain responses
& \tablescore{75.96}{0.80}
& \tablescore{18.29}{0.75}
& \tablescore{27.33}{1.00}
& \tablescore{34.68}{1.48}
& \tablescore{39.07}{0.68} \\

Domain tokens
& \tablescore{73.80}{0.97}
& \tablescore{18.86}{0.69}
& \tablescore{28.33}{1.35}
& \tablescore{35.19}{1.27}
& \tablescore{39.05}{0.72} \\

Global tokens
& \tablescore{74.20}{0.80}
& \tablescore{19.77}{0.88}
& \tablescore{28.67}{1.20}
& \tablescore{33.38}{1.61}
& \tablescore{39.01}{0.59} \\

\midrule
\multicolumn{6}{@{}l}{\textbf{SGD, PG loss}} \\

Domain responses
& \tablescore{76.48}{0.73}
& \tablescore{19.53}{0.79}
& \tablescore{28.20}{1.12}
& \tablescore{35.56}{1.36}
& \tablescore{39.94}{0.52} \\

Domain tokens
& \tablescore{75.20}{0.87}
& \tablescore{20.28}{0.92}
& \tablescore{28.47}{1.12}
& \tablescore{33.88}{1.45}
& \tablescore{39.46}{0.55} \\

Global tokens
& \tablescore{74.96}{1.05}
& \tablescore{19.24}{0.73}
& \tablescore{27.53}{1.35}
& \tablescore{35.32}{1.52}
& \tablescore{39.26}{0.60} \\

\bottomrule
\end{tabular*}
\end{table}

\subsection{Loss averaging changes gradients and per-task scores, but barely the mean}

With equal prompt counts, mathematics supplies about three times as many tokens as instruction following, giving it three times the loss weight under global token averaging (GT; Figure~\ref{fig:normalization_capability}a). Domain token averaging (DT) fixes domain weights but still weights longer responses more. Domain response averaging (DR) also weights responses equally.
For response gradients $g_i=\nabla_\theta\ell_i$ and unweighted domain means $\bar g_d,\bar T_d$, DT's within-domain gradient is
\begin{equation}
\frac{\sum_{i\in\mathcal B_d}T_i g_i}{\sum_{i\in\mathcal B_d}T_i}
=\bar g_d+\frac{\operatorname{Cov}_d(T,g)}{\bar T_d}.
\label{eq:weighting}
\end{equation}
The covariance term measures how length weighting shifts the domain gradient when longer responses have systematically different gradients. DR removes this term. Open-MOPD's token-share correction recovers DT \citep{gao2026openmopddiagnosingfixingcapability}: it fixes domain weights but retains length weighting within each domain. Appendix~\ref{app:normalization} extends this identity to arbitrary response and position weights.

To measure the change in gradient direction, we compare DT and DR on the same responses at fixed student parameters. Their mean gradient cosine is 0.68 in six batches (Figure~\ref{fig:normalization_capability}b). In full training runs, the range of four-task mean scores among GT, DT, and DR after 500 steps is 0.28 percentage points under Adam with the PG loss and 0.06 with the top-64 intersection KL loss. Under SGD with the PG loss, this range is 0.68 points (Table~\ref{tab:main_outcomes}).

GT assigns mathematics about 44\% of the loss weight (Figure~\ref{fig:normalization_capability}a). Relative to DR, GT raises mathematics accuracy by 3.0 points and lowers science by 2.1 points with the PG loss. With top-64, mathematics falls by 1.8 points despite its larger weight, while code rises by 1.5 points despite its smaller weight (Table~\ref{tab:main_outcomes}). Thus, the loss changes how weighting affects task performance (Section~\ref{sec:density_tasks}).

\subsection{Adam reduces differences in update direction}

For each of these six batches, we apply the DT and DR gradients separately using the same saved Adam state. The resulting one-step parameter updates have a mean cosine of 0.96 (Figure~\ref{fig:normalization_capability}b). Adam mixes current gradients with past gradients through its first moment and rescales coordinates through its second moment \citep{kingma2015adam}.

\begin{figure}[t]
\centering
\includegraphics[width=\linewidth]{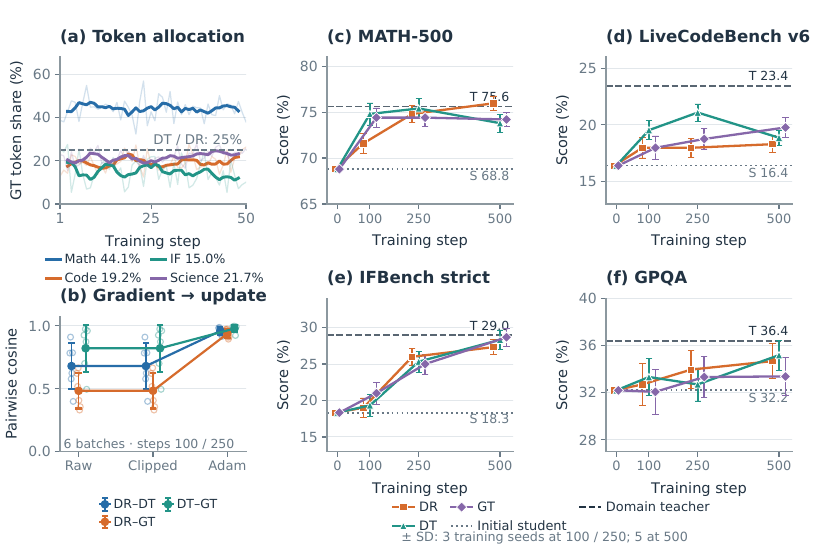}
\caption{\textbf{Loss averaging changes gradients, while shared Adam state aligns updates.}
(a) Domain token shares under global token averaging.
(b) Gradient and update cosines for six batches (three per checkpoint).
(c--f) Per-task learning curves with the top-64 intersection KL loss. Error bars show standard deviations from three training seeds at steps 100 and 250, and five at step 500.
GT/DT/DR denote global token, domain token, and domain response averaging. S/T mark the initial student and domain teacher.}
\label{fig:normalization_capability}
\end{figure}

DR triggers more clipping, which rescales gradients without changing direction. Resetting Adam's first moment shrinks steps; resetting both moments enlarges them (Figure~\ref{fig:normalization_interventions}a,b). For each rule, we scale a single SGD step to match its saved-state Adam step norm. The held-out KL change differs more between rules under SGD than under saved-state Adam (panel c). During training, Adam step norms become similar despite distinct gradient norms, but cumulative parameter changes remain different (panels d--f).
\begin{figure}[t]
\centering
\includegraphics[width=\linewidth]{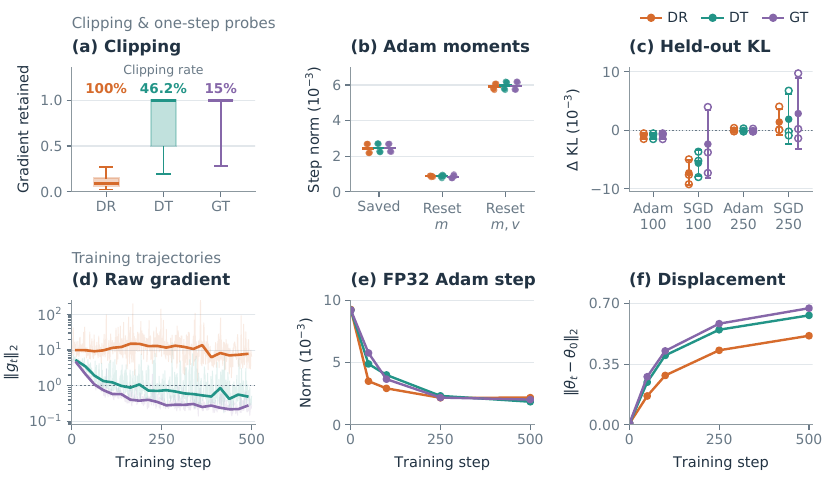}
\caption{\textbf{Optimizer processing attenuates differences between averaging rules.}
(a) Gradient retention and clipping rates.
(b) Adam step norms with saved or reset moments (step count retained).
(c) Held-out reverse KL after minus before one step; SGD matches each rule's saved-Adam step norm. Open points: batches; filled points/bars: mean/SD.
(d--f) Raw gradient norms, FP32 step norms, and cumulative FP32 displacement under Adam with the top-64 intersection KL loss. Faint/solid traces show individual steps/25-step medians; dotted line: clipping threshold. GT/DT/DR denote global token, domain token, and domain response averaging.}
\label{fig:normalization_interventions}
\end{figure}

\subsection{Raising the response cap shifts token-averaged objectives most}

We compare maximum training response lengths of 4,096 (4K) and 8,192 (8K) generated tokens per response. Unlike RL with final-answer rewards \citep{yu2025dapo}, on-policy distillation can supervise incomplete responses using teacher probabilities at each prefix \citep{agarwal2024gkd}. Increasing this limit can add supervised token positions, redistributing even DR's fixed response weight.
At three checkpoints, paired batches generated under the 4K and 8K limits share student and Adam states. Relative raw-gradient differences, $\|g_{8K}-g_{4K}\|/\|g_{4K}\|$, follow GT $>$ DT $>$ DR (Figure~\ref{fig:response_cap}b); clipping and Adam can alter this ranking. In training, the 8K-minus-4K mean-score differences at step 100 are GT $+1.05$, DT $+0.32$, and DR $-0.83$ percentage points (panel c).

\begin{figure}[!ht]
\centering
\includegraphics[width=\linewidth]{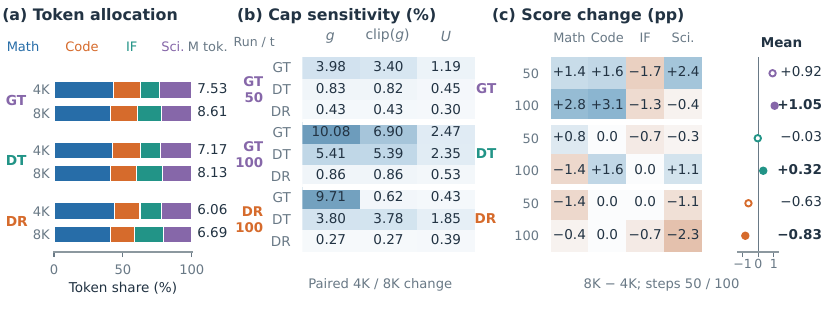}
\caption{\textbf{Loss averaging influences the effect of doubling the maximum training response length from 4,096 (4K) to 8,192 (8K) tokens.}
(a) Domain token shares and total response tokens (millions) during 100 steps.
(b) Relative 4K/8K changes in gradients ($g$), clipped gradients, and Adam steps ($U$); groups name source 4K run and training step, rows mark averaging rules. Each group averages three paired batches.
(c) Task score differences (8K minus 4K, percentage points); adjacent points show equal-domain means at steps 50 (open) and 100 (filled). GT/DT/DR denote global token, domain token, and domain response averaging.}
\label{fig:response_cap}
\end{figure}

\section{BF16 Rounding and Adam Momentum Obscure How Teachers Change the Student}
\label{sec:subnetworks}

\subsection{BF16 rounding hides widespread FP32 changes}

Under domain response averaging, about 97\% of the MOPD student's FP32 master weights differ from initialization, versus 7--9\% after rounding them to BF16 (7--11\% including DT and GT; Figure~\ref{fig:subnetwork_overview}a). The master weights retain higher precision for optimization.
For each parameter, change means its final value minus its initial value. We rank parameters by squared change. About one third account for 90\% of the total in FP32, versus about 4\% in BF16. Rounding therefore hides small changes and makes widespread parameter movement appear sparse.

\begin{figure}[t]
\centering
\includegraphics[width=\linewidth]{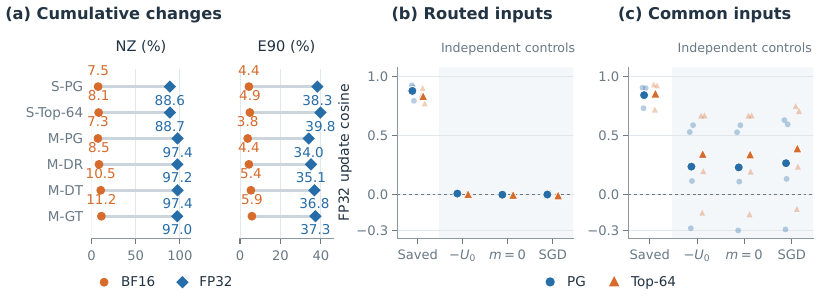}
\caption{\textbf{BF16 rounding hides small changes; shared Adam momentum aligns student updates.}
(a) FP32/BF16 parameter fractions: changed from initialization (NZ), or carrying 90\% of squared change (E90). S/M: math-only/MOPD training; PG: sampled-token loss; Top-64: intersection KL loss. M-DR/M-DT/M-GT: Top-64 with response/domain-token/global-token averaging; other rows average responses.
(b,c) Pairwise update cosines: own-domain responses (Routed) or shared prefixes (Common). Saved retains trained Adam state; $-U_0$ subtracts the zero-gradient step; $m=0$ resets the first moment. Small points average teacher pairs within batches; large points average batches.}
\label{fig:subnetwork_overview}
\end{figure}

\subsection{Momentum hides differences between teachers}

We fix parameters and Adam state from the MOPD student trained with the PG loss. For each teacher, we compute the corresponding student loss gradient and simulate one FP32 parameter update. Adam's first moment mixes past gradients into each update. When teachers score student responses from their own domains, the mean cosine, computed using all teacher pairs and batches, exceeds 0.83 with the PG and top-64 intersection KL losses. Resetting the first moment while retaining the second, or using norm-matched SGD, reduces it to near zero (Figure~\ref{fig:subnetwork_overview}b). Shared momentum therefore explains most of this alignment.
Momentum can move parameters even with zero current gradient. We subtract this baseline to measure how the current gradient changes Adam's update, including its second-moment scaling. Let $g_d$ be the student loss gradient from teacher $d$'s supervision, $H$ the saved Adam moments and step counter, and $U(g;H)$ the resulting parameter update:
\begin{equation}
\delta U_d=U(g_d;H)-U(0;H).
\label{eq:history_subtracted}
\end{equation}
The mean pairwise cosine of these differences is below 0.01 on domain-specific inputs. Having all teachers score identical student prefixes holds inputs fixed and raises it to about 0.24--0.35 (Figure~\ref{fig:subnetwork_overview}c). Teacher and input differences can therefore remain hidden within aligned Adam updates.

\subsection{Momentum-free SGD achieves higher mean scores}

In MOPD, the student generates new responses as it learns, changing the prefixes teachers supervise. Momentum may retain directions favored by earlier responses and delay adaptation to current supervision, a concern also raised for reinforcement learning \citep{mukherjee2026sgd}. In full training runs with the PG loss, momentum-free SGD, which keeps no optimizer history, achieves higher four-task mean scores than Adam on all three averaging rules (DR: 39.94 vs.\ 38.91; DT: 39.46 vs.\ 38.63; GT: 39.26 vs.\ 38.79; Table~\ref{tab:main_outcomes}).

\section{Closer Vocabulary Gradients Do Not Consistently Improve Capabilities}
\label{sec:density}

\subsection{Top-64 intersections nearly recover the full-vocabulary gradient}
\label{sec:coverage_gradient}

With fixed domain weights, vocabulary truncation can change the student's combined loss gradient \citep{anshumann2025sparse,huang2026tailaware}. We measure retained student probability, $c_k(h)=\sum_{v\in I_k}p_\theta(v\mid h)$, and agreement with the full-vocabulary KL gradient.
In Qwen, the top-64 intersection retains, on average, more than 99.9\% of total student probability and more than 99.97\% of the student's own top-64 mass. Its gradient direction nearly matches the full-vocabulary KL gradient (Figure~\ref{fig:supervision_overview}b). Averaging gradients from 1, 16, or 64 samples per fixed prefix raises cosine to this reference from 0.54 to 0.87 and 0.97 (Figure~\ref{fig:sampling_history_controls}a).

\begin{figure}[t]
\centering
\includegraphics[width=\linewidth]{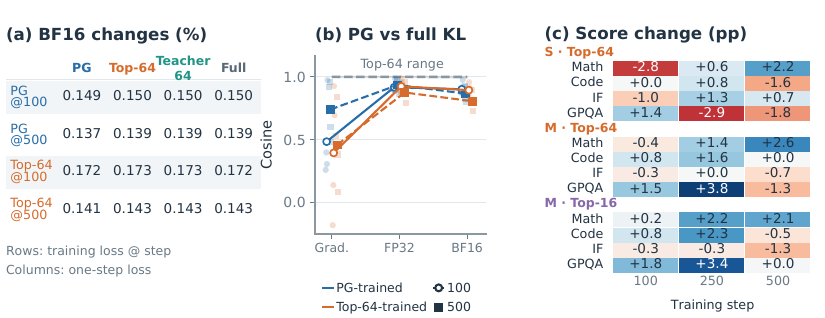}
\caption{\textbf{Top-64 closely matches full-vocabulary gradient directions in Qwen.}
(a) Rows: training loss@step. Columns compare one-step BF16 changed fractions for different losses at fixed parameters, responses, and Adam state.
(b) PG-to-full gradient and update cosines across training snapshots; gray shows Top-64-to-full ranges.
(c) Score differences from PG under response averaging. PG: sampled-token loss; Top-16/Top-64: intersection KL losses; Teacher-64: teacher-supported loss without renormalization (Appendix~\ref{app:losses}); Full: full-vocabulary KL loss. S/M: mathematics-only/MOPD training.}
\label{fig:supervision_overview}
\end{figure}

SmolLM3-3B illustrates the limits of mean coverage: top-64 retains 97.42\% initially and 98.56\% after training, yet 8.42\% and 3.65\% of weighted prefixes, respectively, retain less than 90\% of student probability. After training, relative gradient error spans 2.16--24.54\% among diagnostic batches (Figure~\ref{fig:smollm_supervision}).

\begin{figure}[t]
\centering
\includegraphics[width=\linewidth]{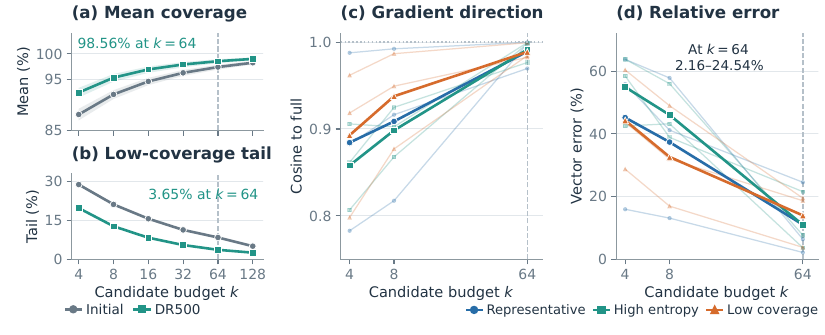}
\caption{\textbf{High mean coverage can hide gradient error in SmolLM.}
(a,b) Mean intersection coverage and fraction below 90\%; domains, responses within domains, and positions within responses are equally weighted. Bands: response-cluster 95\% intervals.
(c,d) Gradient cosine and relative error against full-vocabulary KL after training; relative error is the gradient-difference norm divided by the full-gradient norm. Dashed lines: $k=64$. Thin/bold lines: batches/means. Initial: initialization; DR500: 500 steps with top-64 intersection KL and response averaging.}
\label{fig:smollm_supervision}
\end{figure}

\begin{figure}[t]
\centering
\includegraphics[width=0.9\linewidth]{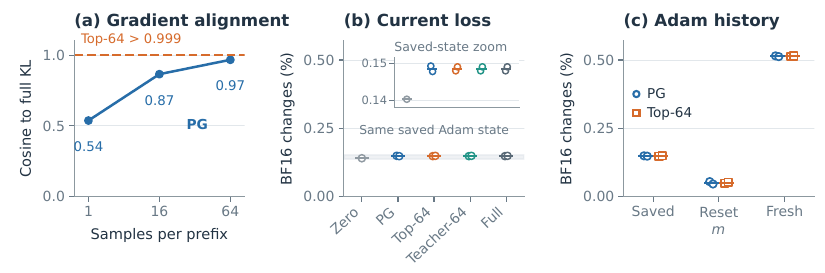}
\caption{\textbf{Sampling improves gradient alignment; Adam history affects BF16 changes.}
(a) Mean gradient cosine to full KL.
(b,c) Changed BF16 fractions with shared axes; inset repeats (b)'s categories.
PG: sampled-token loss; Top-64: intersection KL; Teacher-64: teacher-supported loss; Full: full-vocabulary KL.
Saved retains Adam state; reset-$m$ clears its first moment; fresh Adam clears both moments and step count.
In (b,c), open points show measurements; bars show means.}
\label{fig:sampling_history_controls}
\end{figure}

\subsection{Supervision breadth barely affects the fraction of changed BF16 weights}

Starting from identical student parameters and Adam state, the PG, top-64 intersection KL, teacher-top-64, and full-vocabulary KL losses each change nearly the same fraction of BF16 weights in one step (Figure~\ref{fig:supervision_overview}a).
Adam history strongly affects these counts: even with zero current gradient, about 0.14\% of BF16 weights change, close to the supervised fractions. Resetting only the first moment lowers the fraction; fully resetting Adam raises it (Figure~\ref{fig:sampling_history_controls}b,c). Therefore, the fraction of BF16 weights that change reflects optimizer history and rounding (Section~\ref{sec:subnetworks}). Coverage and gradient agreement assess the truncation approximation.

\subsection{Task gains from top-\texorpdfstring{$k$}{k} depend on the averaging rule}
\label{sec:density_tasks}

Under response averaging, top-64 exceeds PG in mathematics accuracy by 2.6 percentage points, while instruction following and science scores decrease; the four-task mean rises by 0.16 points. Under global token averaging, mathematics accuracy falls by 2.1 points, while code and instruction following improve; the mean rises by 0.21 points (Table~\ref{tab:main_outcomes}). Close gradient agreement therefore coexists with task-specific gains and losses, and response weighting changes which tasks benefit.

These outcomes show that gradient fidelity at fixed prefixes is not a sufficient predictor of task-level gains: PG is unbiased for the full reverse-KL gradient, and top-64 approximates its direction with cosine above 0.999 in Qwen, yet their task effects differ across averaging rules.

\section{Related Work}

\paragraph{Distillation objectives and vocabulary truncation.}
MiniLLM uses reverse KL \citep{gu2024minillm}; GKD distills student-generated responses with alternative divergences \citep{agarwal2024gkd}; $f$-DISTILL derives token losses from sequence $f$-divergences \citep{wen2023fdivergence}. Sparse Logit Sampling examines top-$k$ bias \citep{anshumann2025sparse}; Tail-Aware OPD restores discarded tail-mass supervision \citep{huang2026tailaware}. We compare coverage, gradient fidelity, and task gains, accounting for sampling noise and optimizer processing.

\paragraph{Multi-teacher distillation and loss weighting.}
MOPD combines specialized teachers \citep{ma2026mopdmultiteacheronpolicydistillation}. Open-MOPD corrects domain token shares but retains within-domain length weighting \citep{gao2026openmopddiagnosingfixingcapability}. GradNorm uses gradient norms to balance task training rates \citep{chen2018gradnorm}. DAPO averages token losses \citep{yu2025dapo}, while Dr.~GRPO analyzes length and reward-normalization bias \citep{liu2025understandingr1zero}. We derive response weights' effects through their covariance with gradients and measure changes in Adam updates and task scores.

\paragraph{Parameter changes and gradient alignment.}
Task arithmetic combines parameter differences from a shared initialization \citep{ilharco2023taskarithmetic}. Prior work studies small parameter subsets supporting RL \citep{mukherjee2025subnetworks}, concentrated changes among distilled students \citep{yu2026denseupdates}, and low-dimensional OPD updates \citep{shen2026opdgeometry}. PCGrad projects conflicting gradients \citep{yu2020gradientsurgerymultitasklearning}; CAGrad uses worst-task improvement \citep{liu2024conflictaversegradientdescentmultitask}. We separate current supervision from Adam history, find nearly orthogonal teacher increments on routed inputs, and distinguish FP32 movement from BF16 sparsity.

\section{Discussion}

We study how multi-teacher supervision becomes capability through student gradients and parameter updates. Domain balancing retains within-domain length weighting; Adam history aligns updates; BF16 rounding hides small FP32 changes. Closer full-vocabulary gradient approximation does not consistently improve task scores. These findings support choosing response weighting, vocabulary supervision, and optimizer settings together for the target tasks and training budget. 
Our findings are derived from limited model families and scales, and the observation may differ in larger-scale settings.
Future work may explore whether smaller Adam $\beta_1$ improves adaptation to teacher signals, and whether gradient clipping and Adam's second-moment normalization could explain why losses with similar expected gradients produce different task outcomes.

\bibliography{iclr2027_conference}
\bibliographystyle{iclr2027_conference}

\clearpage
\appendix

\section{Experimental Setup}
\label{app:setup}

\subsection{Models, data, and training}

\textbf{MOPD training.} Our main experiments compare how normalization, loss formulation, and optimizer choice affect learning from the same four domain teachers. The student is Qwen3-1.7B \citep{qwen2025qwen3}, and the teachers were trained by reinforcement learning from the same initialization to specialize in mathematics, code, instruction following, and science. The main training comparisons use seeds 42, 43, 44, 45, 46.

AdamW uses learning rate $2.5\times10^{-7}$, $(\beta_1,\beta_2)=(0.9,0.98)$, $\epsilon=10^{-8}$, zero weight decay, a constant learning rate, and global norm-1 clipping. SGD uses learning rate $2.5\times10^{-2}$, following the setting selected by \citet{mukherjee2026sgd} through an analysis of AdamW's effective per-parameter learning rates and an SGD learning-rate sweep. Student responses are sampled at temperature 1 and top-$p=1$, with a 2,048-token prompt cap and a maximum response length of 4,096 tokens (4K), increased to 8,192 tokens (8K) in the response-length comparison (Figure~\ref{fig:response_cap}). Each training step aggregates one fresh response for each of 64 prompts.

MOPD training allocates 16 prompts to each of the four domains and assigns target domain weights of $0.25$ under the domain-balanced rules. Single-teacher training uses all 64 prompts from the teacher's domain. Top-64 and Top-16 denote intersection KL losses using Equation~\ref{eq:topk} with $k=64$ and $k=16$, respectively.

Experiments are conducted using 10 * NVIDIA RTX PRO 6000 Blackwell GPUs.

\paragraph{Domain data.}
We construct the four domain datasets from
\href{https://huggingface.co/datasets/nvidia/Nemotron-3-Nano-RL-Training-Blend}{Nemotron-3-Nano-RL-Training-Blend}. Table~\ref{tab:domain_data_sources}
lists the sources and the sizes of each dataset.
The mathematics pool combines DAPO-Math-17k and the mathematics
split of Skywork-OR1-RL-Data with OmniMath excluded. The code,
instruction-following, and science pools use the competitive-coding,
instruction-following, and STEM multiple-choice components,
respectively.

\begin{table}[h]
\centering
\small
\caption{Domain data sources and converted pool sizes before
stage-specific filtering.}
\label{tab:domain_data_sources}
\begin{tabular}{@{}lp{0.55\linewidth}r@{}}
\toprule
Domain & Upstream dataset & Rows \\
\midrule
Mathematics &
\path{BytedTsinghua-SIA/DAPO-Math-17k} and
\path{Skywork/Skywork-OR1-RL-Data} &
22,056 \\
Code &
\path{nvidia/Nemotron-RL-coding-competitive_coding} &
19,169 \\
Instruction following &
\path{nvidia/Nemotron-RL-instruction_following} &
16,575 \\
Science &
\path{nvidia/Nemotron-RL-knowledge-mcqa}
(STEM multiple-choice component) &
19,670 \\
\bottomrule
\end{tabular}
\end{table}

\paragraph{Teacher construction.}
We train four domain-specific Qwen3-1.7B teachers independently
from the common initialization using GRPO
\citep{shao2024deepseekmathpushinglimitsmathematical},
implemented in Slime codebase.
Each rollout batch contains 16 prompts, with the number of
responses per prompt specified in
Table~\ref{tab:grpo_teacher_training}.
Generation uses temperature 1, top-$p=1$, no top-$k$ sampling truncation,
a 2,048-token prompt limit, and an 8,192-token response limit.
We apply the chat template with thinking disabled. Training uses AdamW with a constant learning rate of $10^{-6}$,
$(\beta_1,\beta_2)=(0.9,0.9987381276)$,
$\epsilon=10^{-8}$, zero weight decay, no learning-rate warmup,
and global gradient-norm clipping at 1.

\begin{table}[h]
\centering
\small
\caption{Teacher GRPO setup.}
\label{tab:grpo_teacher_training}
\begin{tabular}{@{}lrrr@{}}
\toprule
Domain & Responses per prompt & Batch Size & Updates \\
\midrule
Mathematics           & 8  & 16 & 500 \\
Code                  & 16 & 16 & 500 \\
Instruction following & 16 & 16 & 500 \\
Science               & 4  & 16  & 1000 \\
\bottomrule
\end{tabular}
\end{table}

\subsection{Evaluation and scoring}

We evaluate mathematics on MATH-500 (500 questions) \citep{hendrycks2021math}, code on LiveCodeBench v6 (1,055 questions) \citep{jain2024livecodebench}, instruction following on IFBench strict (300 questions) \citep{pyatkin2025ifbench}, and science on GPQA Diamond (198 questions) \citep{rein2023gpqa}. The first three benchmarks use one greedy response per question. GPQA uses 32 responses per question at temperature 1 and top-$p=1$; average@32 accuracy first averages response correctness within each question and then averages these question scores. Evaluation permits up to 32,768 response tokens within a 32,768-token context. Reported cross-domain means weight the four benchmarks equally.

\section{Loss Averaging: Derivation and Controls}
\label{app:normalization}

Response length changes both a domain's contribution to the loss and the relative weights of responses within it. For a fixed rollout batch, token masks, and domain weights, define the empirical covariance with normalization $1/n_d$, where $n_d=|\mathcal B_d|$. Expanding gives $\operatorname{Cov}_d(T,g)=n_d^{-1}\sum_i T_i g_i-\bar T_d\bar g_d$; dividing by $\bar T_d$ gives Equation~\ref{eq:weighting}. Reweighting each domain's global-token contribution by its target weight divided by its observed token share gives domain-token averaging exactly. The remaining length weighting within each domain is captured by this covariance.

The same identity holds when $T_i$ is replaced by any nonnegative response weight $f_i$ with positive mean, and analogously for token-level weights: weighting by $f$ shifts the unweighted domain gradient by $\operatorname{Cov}_d(f, g)/\bar f_d$. Length is one implicit choice of $f$. UC-MOPD chooses weights explicitly by retaining rollouts with strong positive learning signal \citep{ucmopd}. Each choice changes the domain gradient through the covariance between $f$ and per-response gradients. Within each domain, averaging rules also reweight positions: late positions occur only in long responses, and DR gives each token of a long response a smaller weight, so DR moves loss weight toward early positions relative to DT and GT.

\subsection{Fixed-batch gradient and optimizer comparisons}
\label{app:fixed_batch_normalization}

To measure the effect of averaging alone, we reuse the same responses, token masks, teacher scores, and student state for all three rules. At steps 100 and 250 of the MOPD student trained with the PG loss, we cache three response batches (seeds 1042, 1043, and 1044), each with eight responses per domain, using the training prompt format and a 4,096-token response cap. We compute each response's mean top-64 gradient at the saved BF16 parameter values in FP32, then aggregate under domain-response (DR), domain-token (DT), and global-token (GT) averaging.

In these six batches, the DR--DT raw-gradient cosine ranges from 0.40 to 0.91 (mean 0.68; Figure~\ref{fig:normalization_capability}b). Therefore length weighting changes the gradient direction even when domain weights are fixed. We verify the covariance identity in FP64 for all 24 batch-domain groups; the largest relative residual is $1.53\times10^{-15}$.

The optimizer interventions in Figure~\ref{fig:normalization_interventions} ask how much of this gradient difference survives the update. Each aggregate gradient is clipped at norm 1 and applied to the same saved FP32 Adam state. The saved-state branch retains both moments; reset-$m$ zeros the first moment; reset-$(m,v)$ zeros both moments. Both reset branches retain the step counter. Momentum-free SGD follows the negative gradient and is rescaled separately for each averaging rule to match that rule's saved-state Adam step norm. Each branch applies one step to a checkpoint copy, casts the updated parameters to BF16, and measures full-vocabulary reverse KL on one held-out response per domain.

\section{Parameter and Optimizer Measurements}
\label{app:records}

\subsection{Recording gradients, optimizer steps, and weight changes}

To distinguish the current learning signal from the change stored in the model, we record gradients before and after clipping, FP32 optimizer steps, and cumulative displacement from initialization in both FP32 master weights and BF16 model weights. Global statistics count unique model coordinates, excluding vocabulary padding and duplicate entries for tied weights. Energy concentration ranks coordinates by absolute change and measures their share of squared Euclidean norm.

Checkpoint measurements preserve FP32 master weights, BF16 model weights, Adam moments, hyperparameters, and step counters. Student and teacher values are checked against their saved models in the same precision. Forward and backward passes use FP32 with TF32 disabled. Diagnostic steps use a non-fused Adam implementation applied to copies of the saved state; online training uses fused Adam.

\subsection{Separating the current gradient from optimizer history}

The controls in Figure~\ref{fig:sampling_history_controls}(b,c) start from the MOPD student trained with the PG loss at step 100. Two cached batches (seeds 42 and 43) each contain one response per domain, with a 2,048-token prompt cap and a 256-token response cap. All eight responses reach the cap. We sample four positions without replacement per response and weight domains and sampled positions equally; each domain teacher scores its own response.

We compare three Adam states. \emph{Saved} retains both moments and the step counter. \emph{Reset m} zeros the first moment while retaining the second moment and counter. \emph{Fresh} zeros both moments and resets the counter. All gradients are clipped at norm 1, with zero weight decay. A separate zero-gradient control retains the trained moments and is evaluated once. Casting the saved FP32 master weights to BF16 without an optimizer step exactly reproduces the saved model, so this control isolates movement from the retained optimizer state.




\section{Vocabulary-Restricted Losses}
\label{app:losses}

\subsection{Teacher-supported reference loss}
To compare update concentration for different vocabulary supports, we supplement the PG and top-$k$ intersection KL losses with the teacher-top-$k$ reference loss and the full-vocabulary KL loss. Teacher-top-$k$ uses the fixed teacher top-$k$ set $T_k$ and full-vocabulary probabilities, without renormalizing the probabilities of tokens in $T_k$:
\begin{equation}
\ell_{\mathrm{Teacher}\text{-}k}=\sum_{v\in T_k}\left[p_\theta(v)\log\frac{p_\theta(v)}{q_d(v)}-p_\theta(v)+q_d(v)\right].
\label{eq:teacher_topk}
\end{equation}
Teacher-top-$k$ and full KL serve as single-step references; our comparisons use $k=64$ (Teacher-64 in figures). End-to-end training compares the PG loss with the top-$k$ intersection KL loss at $k\in\{16,64\}$.

\subsection{Why we use top-\texorpdfstring{$k$}{k} intersections}
\label{app:intersection_topk}

Our top-$k$ intersection KL loss (Top-$k$) is motivated by an engineering constraint in teacher scoring. Computing a student-top-$k$ loss requires the teacher's scores at the student's top-$k$ token IDs, which depend on the response position. The SGLang teacher interface used in our pipeline does not directly support these position-dependent queries: \texttt{token\_ids\_logprob} accepts one shared list of token IDs for all positions. \citet{zhu2026manyfacesopd} documents this interface mismatch. A workaround queries the union of all student candidate sets at every position, but for a response of length $L$ this returns $L\times|U|$ scores, where $U=\bigcup_{t=1}^{L}S_k(h_t)$, instead of the desired $L\times k$ scores. The union can grow to $\min(|\mathcal V|,Lk)$, substantially increasing memory and data-transfer costs.

We therefore obtain each model's top-$k$ token IDs and scores and match IDs locally at each position, retaining $I_k=S_k\cap T_k$. Both models' scores are available on this intersection without additional teacher queries for student-selected IDs. We renormalize the probabilities of tokens in $I_k$ and apply Equation~\ref{eq:topk}; $k$ is the candidate budget per model, while the supervised support can be smaller.

\citet{li2026rethinkingopd} find that intersection-based supervision recovers nearly all the performance gains of student-top-$k$ OPD in their $k=16$ reasoning experiments. Their result suggests that restricting supervision to shared candidates need not materially reduce OPD performance in that setting, providing empirical motivation for our intersection-based implementation.

\section{Unbiasedness of PG at a Fixed Prefix}
\label{app:pg_unbiasedness}

Fix a domain $d$ and a prefix $h=(x,y_{<t})$. Write $p_\theta(v)=p_\theta(v\mid h)$ and $q_d(v)=q_d(v\mid h)$. Assume a finite vocabulary $\mathcal V$, differentiable student probabilities, and $p_\theta(v),q_d(v)>0$ for every $v\in\mathcal V$. The teacher is frozen, and all gradients below hold the prefix fixed.

\begin{proposition}[Fixed-prefix unbiasedness]
For a token $Y\sim p_\theta(\cdot\mid h)$, the PG surrogate in Equation~\ref{eq:full} satisfies
\begin{equation}
\mathbb E_{Y\sim p_\theta(\cdot\mid h)}
\left[\nabla_\theta\ell_{\mathrm{PG}}(Y)\right]
=\nabla_\theta\ell_{\mathrm{full}},
\label{eq:pg_unbiasedness}
\end{equation}
where each sampled token and the argument of $\sg$ are held constant when differentiating the surrogate.
\end{proposition}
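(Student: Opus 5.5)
The plan is to compute both sides explicitly and compare them term by term, using only the score-function identity and the fact that probabilities sum to one.

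\textbf{Surrogate gradient.} Since the sampled token $Y=v$ and the quantity $\sg[\log q_d(v)-\log p_\theta(v)]$ are held constant, differentiating the surrogate gives
\begin{equation*}
\nabla_\theta\ell_{\mathrm{PG}}(v)=\bigl(\log p_\theta(v)-\log q_d(v)\bigr)\nabla_\theta\log p_\theta(v).
\end{equation*}
Taking the expectation under $Y\sim p_\theta$ over the finite vocabulary yields
\begin{equation*}
\sum_{v\in\mathcal V} p_\theta(v)\bigl(\log p_\theta(v)-\log q_d(v)\bigr)\nabla_\theta\log p_\theta(v)
=\sum_{v\in\mathcal V}\bigl(\log p_\theta(v)-\log q_d(v)\bigr)\nabla_\theta p_\theta(v),
\end{equation*}
using $p_\theta\nabla\log p_\theta=\nabla p_\theta$. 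This step requires $p_\theta(v)>0$, which is assumed.

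\textbf{Full reverse-KL gradient.} Next, differentiate $\ell_{\mathrm{full}}=\sum_v p_\theta(v)\bigl(\log p_\theta(v)-\log q_d(v)\bigr)$ directly by the product rule. The teacher is frozen and the sum is finite, so differentiation and summation commute. This gives
\begin{equation*}
\sum_v \bigl(\log p_\theta(v)-\log q_d(v)\bigr)\nabla_\theta p_\theta(v)+\sum_v p_\theta(v)\nabla_\theta\log p_\theta(v).
\end{equation*}
The second sum equals $\sum_v\nabla_\theta p_\theta(v)=\nabla_\theta\sum_v p_\theta(v)=\nabla_\theta 1=0$. What remains matches the expected surrogate gradient, which proves Equation~\ref{eq:pg_unbiasedness}.

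\textbf{Main obstacle.} The computation itself is short. The one step needing care is the interpretation of $\sg$ and of the held-fixed sample. The gradient must not flow through the sampling distribution of $Y$ inside the expectation, and it must not flow through the log-ratio coefficient. The claim compares the expectation of per-sample gradients with the gradient of the full loss. It does not differentiate the expectation of the surrogate value, which would generate an extra score term. So the proof should state this convention explicitly at the outset.
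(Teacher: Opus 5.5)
Your proposal is correct and follows essentially the same route as the paper. Both hold the $\sg$ coefficient fixed to get $(\log p_\theta(v)-\log q_d(v))\nabla_\theta\log p_\theta(v)$, expand $\nabla_\theta\ell_{\mathrm{full}}$ by the product rule, and cancel the extra term using $\nabla_\theta p_\theta=p_\theta\nabla_\theta\log p_\theta$ and $\sum_v\nabla_\theta p_\theta(v)=0$. The only difference is cosmetic: you rewrite the expectation as a sum against $\nabla_\theta p_\theta$, whereas the paper rewrites the full gradient into expectation form.
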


\begin{proof}
For a sampled token $v$, the stop-gradient operation gives
\begin{equation}
\nabla_\theta\ell_{\mathrm{PG}}(v)
=\left(\log p_\theta(v)-\log q_d(v)\right)
\nabla_\theta\log p_\theta(v).
\label{eq:pg_sample_gradient}
\end{equation}
Differentiating the full reverse-KL loss instead gives
\begin{equation}
\begin{aligned}
\nabla_\theta\ell_{\mathrm{full}}
&=\sum_{v\in\mathcal V}\nabla_\theta p_\theta(v)
\left(\log\frac{p_\theta(v)}{q_d(v)}+1\right)\\
&=\sum_{v\in\mathcal V}p_\theta(v)
\log\frac{p_\theta(v)}{q_d(v)}\,
\nabla_\theta\log p_\theta(v)
+\underbrace{\nabla_\theta\sum_{v\in\mathcal V}p_\theta(v)}_{=\,0}\\
&=\mathbb E_{Y\sim p_\theta(\cdot\mid h)}
\left[\nabla_\theta\ell_{\mathrm{PG}}(Y)\right].
\end{aligned}
\label{eq:pg_full_gradient_identity}
\end{equation}
The second line uses $\nabla_\theta p_\theta(v)=p_\theta(v)\nabla_\theta\log p_\theta(v)$ and $\sum_v p_\theta(v)=1$.
\end{proof}

\section{Details of Teacher, Vocabulary, and Optimizer Controls}
\label{app:nine_panel_support}

\subsection{Comparing teachers at a shared student state}
The teacher controls separate differences in supervision from differences in inputs and optimizer history. Figure~\ref{fig:subnetwork_overview}(a) compares cumulative parameter changes under domain-response averaging, with top-64 domain-token and global-token runs included for context; paired points show FP32 and BF16 statistics. Panels (b,c) start from the same MOPD student trained with the PG loss and its saved Adam moments. In the domain-specific comparison, each teacher scores responses from its own domain; in the shared-input comparison, all teachers score the same prefixes. Each of four cached batches contains all six teacher pairs. Small points summarize pairs within a batch, and large points average batches.

To measure the increment attributable to current supervision, Equation~\ref{eq:history_subtracted} subtracts the Adam step obtained with a zero current gradient before computing FP32 update cosines. SGD and reset-$m$ steps are each rescaled to the norm of the saved-state Adam step. Reset-$m$ retains the second moment and step counter.

\subsection{Comparing vocabulary supports at matched prefixes}
Figure~\ref{fig:supervision_overview}(a,b) uses the MOPD students trained with the PG or top-64 intersection KL loss at steps 100 and 500, all trained with domain-response averaging. Each checkpoint supplies four cached batches and 96 prefixes. Within a checkpoint, losses share prefixes and Adam moments; responses are generated separately for each checkpoint. Panel (a) reports batch means. In panel (b), points show batches, lines connect PG batch means, and shading spans the top-64 measurements. The direction comparison includes the PG, top-64 intersection KL, and full-vocabulary KL losses. Panel (c) compares top-16 and top-64 training against PG training at the same task, teacher setting, and training step.

\subsection{Additional optimizer and sampling controls}
\label{app:added_controls}

Figure~\ref{fig:normalization_interventions}(a) uses training logs from 500 steps per averaging rule. Boxes show interquartile ranges of the gradient fractions retained after clipping, whiskers show the 5th--95th percentiles, and labels give the fraction of training steps on which gradients are clipped. Panels (b,c) use the six batches and saved optimizer states described in Appendix~\ref{app:fixed_batch_normalization}. Panel (c) first averages the four domains' probe-loss changes within each batch; error bars then show sample standard deviations computed from the three batches at each checkpoint.

The sampling-count experiment tests how averaging more token samples improves the approximation to the full-vocabulary KL gradient. Figure~\ref{fig:sampling_history_controls}(a) uses three response batches from the MOPD student trained with the PG loss at step 100, with a 4,096-token response cap: 48 responses and 288 prefixes in total. Twelve fixed prefix subsets each receive two independent draws of 1, 16, or 64 actions per prefix. The top-64 intersection KL loss is evaluated once per subset. The mean PG gradient cosine to full KL rises from 0.54 to 0.87 and 0.97 as the sample count increases; top-64 exceeds 0.999. Panels (b,c) use the separate, shorter-response optimizer-history control in Appendix~\ref{app:records}.

\begin{figure}[t]
\centering
\includegraphics[width=\linewidth]{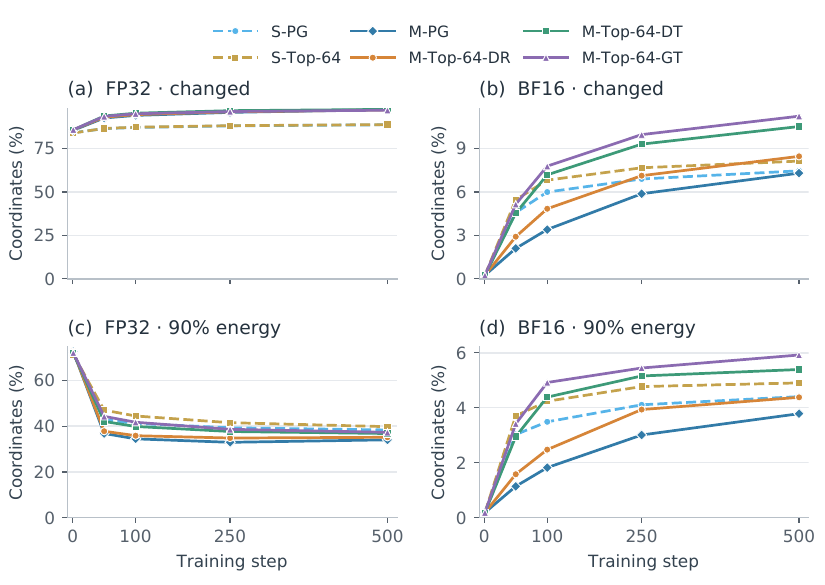}
\caption{\textbf{Cumulative BF16 changes remain much sparser than FP32 changes.}
PG and top-64 trajectories under domain-response averaging in single-teacher and MOPD training, with additional top-64 MOPD runs using domain-token/global-token averaging. Panels (a,b) show nonzero coordinate fractions; panels (c,d) show the fraction of coordinates containing 90\% of squared displacement from initialization. Lines connect steps 1, 50, 100, 250, and 500; dashed lines denote single-teacher training. Figure labels use S/M for single-teacher/MOPD training and PG/Top-64 for the PG/top-64 intersection KL losses.}
\label{fig:geometry}
\end{figure}

\section{Intersection Coverage and Domain-Level Breakdown}
\label{app:completed_results}

\subsection{Probability mass retained by the candidate intersection}
\label{app:actual_intersection_coverage}

To assess how concentrated student predictions support the close gradient agreement in Qwen, we measure the probability retained by the actual student--teacher intersection. The MOPD students trained with the PG or top-64 intersection KL loss, both with domain-response averaging, are evaluated at steps 100 and 500. At each checkpoint, four cached batches provide one response per domain and six sampled positions per response, giving 96 prefixes. Each response is scored by its domain teacher. We record student mass $p_\theta(I_{64}\mid h)$ on the intersection $I_{64}=S_{64}\cap T_{64}$ and its fraction of the student's top-64 mass, $r=p_\theta(I_{64}\mid h)/p_\theta(S_{64}\mid h)$.

Mean intersection mass exceeds 99.9\% at each checkpoint, and the mean retained fraction $r$ exceeds 99.97\%. Therefore, little of the student's top-64 mass is excluded by disagreement with the assigned teacher's candidate set. Among all 384 prefixes, the minimum intersection mass is 96.55\% and every intersection is nonempty. Top-64 therefore retains nearly all student probability throughout the sampled Qwen comparisons.

\subsection{Coverage during top-16 training}
\label{app:i16_online_coverage}

The offline top-64 intersection KL loss audit is complemented by coverage measurements throughout 500 training steps with the top-16 intersection KL loss and domain-response averaging. In 8,000 logged slices of four responses, mean intersection mass is $p_\theta(I_{16})=99.62\%$, mean retained fraction is $r=p_\theta(I_{16})/p_\theta(S_{16})=99.88\%$, and mean intersection size is $|I_{16}|=15.05$. Each slice averages valid tokens within responses and then responses equally. Candidate sets are fixed at rollout, and probabilities are measured during training.

The mean empty-intersection fraction is 0.0027\%. The lowest slice-mean mass is 86.67\% and the largest slice-mean empty-intersection fraction is 1.85\%. These measurements show that high average coverage during training coexists with occasional slices of lower coverage.

\subsection{Domain-level effects of loss averaging}
\label{app:domain_probe_breakdown}

An average probe-loss reduction can conceal an increase on an individual domain. We disaggregate Figure~\ref{fig:normalization_interventions}(c) using the same six batches and twelve update branches per batch (Appendix~\ref{app:fixed_batch_normalization}), with one held-out response per domain. At step 250, saved-state Adam under domain-response, domain-token, and global-token averaging increases instruction-following KL by $0.362/0.293/0.272\times10^{-3}$, respectively; the corresponding four-domain changes are $0.041/-0.002/-0.027\times10^{-3}$. Domain-token and global-token averaging therefore improve the mean probe loss while moving instruction following away from its teacher (Figure~\ref{fig:coverage_domain_controls}).

\begin{figure}[!htbp]
\centering
\includegraphics[width=\linewidth]{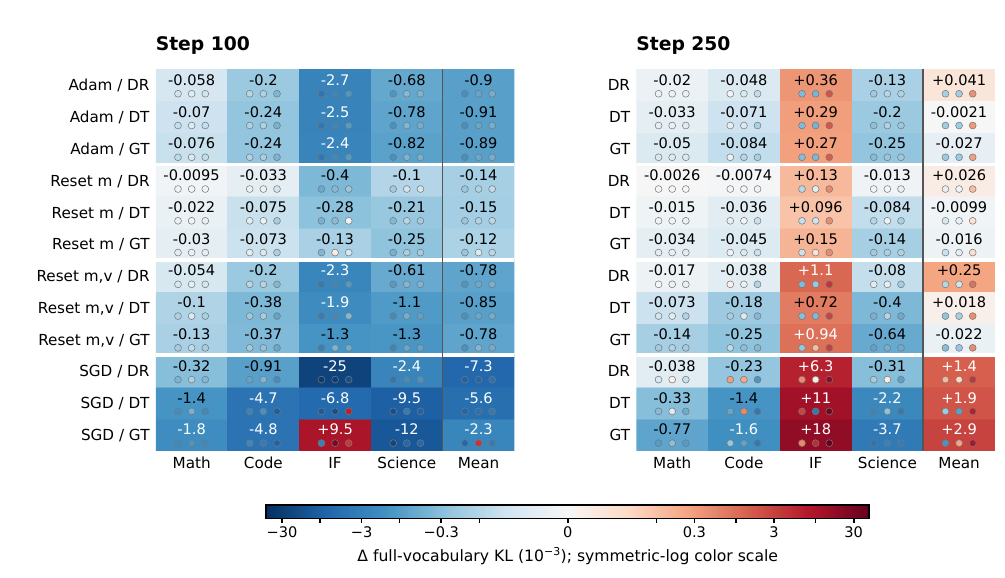}
\caption{\textbf{Mean probe-loss reductions can coexist with domain-level increases.}
Full-vocabulary KL after minus before BF16 writeback, in units of $10^{-3}$. Cells show three-batch means; dots show batches 1042--1044 from left to right. The shared symmetric-log color scale is linear within $\pm0.1$. SGD is matched to each rule's saved-state Adam step norm.}
\label{fig:coverage_domain_controls}
\end{figure}

\section{SmolLM3 Diagnostics and Capability Evaluation}
\label{app:smollm_diagnostics}

\subsection{Models and response coverage}

SmolLM3 provides a complementary setting in which candidate truncation retains less student probability, allowing us to examine gradient approximation beyond the highly concentrated Qwen predictions. We use the SmolLM3-3B student, three RL teachers for mathematics, code, and instruction following, and exported students. The top-64 students use domain-response, domain-token, or global-token averaging with Adam at learning rate $2.5\times10^{-7}$; the PG student uses domain-response averaging with SGD at learning rate $0.0025$.

Diagnostic prompts use Open-MOPD data. We shuffle unique raw messages within each domain and use disjoint coverage, normalization, and probe prompt splits. Generation preserves dataset messages, disables thinking in the chat template, and uses temperature 1, top-$p=1$, a 2,048-token prompt cap, and a 4,096-token response cap. The initialization and the top-64/domain-response student each generate a natural response bank of 384 responses, with equal numbers from each domain. Sampling up to eight uniformly spaced positions per response gives 3,053 prefixes for the initialization and 3,058 for the trained student. Additional students score the trained student's 3,058 prefixes.

Coverage statistics weight domains, responses within domains, and audited positions within responses equally. Mean-mass intervals resample responses independently within each domain 2,000 times using seeds 1042/1043/1044. The initialization and domain-response student use their own natural response banks; the remaining students use the latter bank. On the domain-response student's natural bank, the minimum top-64 intersection mass is $1.7\times10^{-8}$. The domain-token and global-token students each have one empty intersection when scoring the common bank. These low-coverage cases motivate the targeted gradient comparisons below. Figure labels use Initial and DR500 for the initialization and the domain-response student, respectively.

\subsection{Gradient comparisons on typical and difficult prefixes}

Within each domain, we select representative prefixes, prefixes above the domain's 75th entropy percentile, and prefixes whose top-64 intersection mass is below 0.99. The groups can overlap and use up to 32 distinct prompts per domain. The low-coverage code group has 21 eligible prompts, giving the low-coverage group 85 distinct prefixes from all domains combined. Each group is divided into three batches. Smaller candidate budgets, $k=4,8$, are selected using coverage alone.

The BF16 coverage audit sorts a top-128 list and takes its first $k$ entries. Gradient diagnostics recompute top-$k$ in FP32 at the exported BF16 parameter values. Student forward and backward passes use FP32; cached teacher targets use BF16 forward passes followed by FP32 log-softmax. One of the 85 frozen low-coverage prefixes crosses the 0.99 threshold during the FP32 diagnostic and remains in its original group. In FP32, the mean top-64 intersection masses are 99.05\%, 95.68\%, and 93.12\% for the representative, high-entropy, and low-coverage groups, respectively.

Full-vocabulary KL, PG, and top-$k$ intersection KL losses follow Section~\ref{sec:setting}; empty intersections give zero loss. We retain zero- and small-norm cases and report relative gradient error alongside cosine to capture changes in both direction and magnitude. For intersection gradient $g_{I_k}$ and full-vocabulary gradient $g_{\mathrm{full}}$, relative error is $\|g_{I_k}-g_{\mathrm{full}}\|/\|g_{\mathrm{full}}\|$.

\subsection{Supplementary capability observations}

PG training with domain-response averaging and SGD raises MATH-500 accuracy from 50.60\% at initialization to 53.80\%. On AIME 2025, the initialization, the top-64/domain-token Adam student, and the PG/domain-response SGD student each solve four of 30 questions with eight samples per question. Only one solved question is common to all three models.

\end{document}